\def\eqref#1{equation~\ref{#1}}
\def\1{\bm{1}}
\DeclareMathAlphabet{\mathsfit}{\encodingdefault}{\sfdefault}{m}{sl}
\SetMathAlphabet{\mathsfit}{bold}{\encodingdefault}{\sfdefault}{bx}{n}
\DeclareMathOperator*{\argmin}{arg\,min}
\newtheorem{lemma}{Lemma}
\theoremstyle{remark}
\title{Q-learning as a monotone scheme}
\author{Lingyi Yang\\ 
Mathematical Institute\\
University of Oxford\\
\texttt{yangl@maths.ox.ac.uk} \\
}
\begin{document}

\maketitle

\begin{abstract}
Stability issues with reinforcement learning methods persist.
To better understand some of these stability and convergence issues involving deep reinforcement learning methods, we examine a simple linear quadratic example.
We interpret the convergence criterion of exact Q-learning in the sense of a monotone scheme and discuss consequences of function approximation on monotonicity properties.
\end{abstract}

\section{Introduction}
\citet{sutton_reinforcement_2018} coined the combination of bootstrapping, off-policy learning, and function approximations as ``the deadly triad'' due to stability issues.
Let \(x\) denote the state, and \(u\) the action. A simple system for analysis is the linear quadratic (LQ) regulator where the dynamics is linear and the objective function is quadratic
\[x_{t+1} = A x_t + B u_t, \quad \min \bigg\{ \sum_t x_t^\intercal Qx_t + u_t^\intercal P u_t \bigg\}.\]
\citet{recht_tour_2019} observed unstable behaviour on this system using a vanilla policy gradient method.
\citet{agarwal_theory_2021} showed policy gradients converge when the value function is increasing/monotone pointwise.
We know that in general, the monotonicity of a numerical method can have a large impact on its convergence \citep{godunov_1959_finite, crandall_1980_monotone}.
We say an explicit numerical method with updates \(u^{n+1}_i = S(\{u^n_{j}\}_{j\in \mathcal{I}}), \:S: \mathbb{R}^k \to \mathbb{R}\)
is monotone if the operator \(S\) is monotone, i.e. if \(u \geq v \Rightarrow Su \geq Sv \).
Linear schemes are monotone only if all coefficients of \(u^n_j\) are non-negative (see the discussion on positive coefficient discretisation by \citet{forsyth_numerical_2007}).
\citet{barles_convergence_1991} proved that a stable, consistent, and monotone scheme converges (as the mesh size tends to zero) to the viscosity solution \citep{crandall_viscosity_1983, crandall_users_1992}.
We look at the impact of monotonicity in the context of continuous LQ problems, and the implications for Q-learning if we view it as a discretised numerical method.

\section{1D Deterministic Linear Quadratic Problem}

Consider a continuous, infinite-horizon, discounted control problem with linear state dynamics
\begin{equation}\nonumber
\frac{dX_t^{x,u}}{dt} = b(X_t^{x,u}, u_t), \quad  b(X_t^{x,u}, u_t) = \alpha X_t^{x,u} +u_t, \quad X^{x,u}(0) = x,
\end{equation}
where \(\{X^{x,u}\}_t\) is the process starting at \(x\) and following the policy \(u\).
Let the cost function be
\(J(u_s; X_s) = \int_0^\infty e^{-\beta s} f(X^{x,u}_s,u_s)ds,\)
where
\(f(X^{x,u}_t,u_t) = (X_t^{x,u})^2+u_t^2\), and define the value function \(V(x) = \inf_u J(u; x)\).
The Hamilton--Jacobi--Bellman (HJB) equation is 
\begin{equation}\label{eqn:hj1}
-\beta V(x)+\inf_u \big\{ \partial_x V(x) \cdot b(x,u) + f(x,u) \big\} = 0.
\end{equation}
To solve the HJB (\ref{eqn:hj1}) numerically, we can rearrange the formula
to get a fixed point problem
\begin{equation}\label{eqn:scheme}
V^{n+1} = \frac{\beta+\gamma}{\gamma} V^n + \frac{1}{\gamma} \min_u  \bigg\{ \partial_x V^n \cdot (\alpha x +u) + x^2 +u^2 \bigg\}.
\end{equation}
If we choose a finite difference scheme for the derivative depending on the sign of \(\alpha x_i + u\), then we can obtain an upwind method that ensures monotonicity.
This subtle variation of finite difference schemes for different parts of the domain
has an enormous impact on the stability in estimating value function and policies through value/policy iteration. A brief introduction to the continuous control set-up and the stability analysis of (\ref{eqn:scheme}) can be found in Appendix~\ref{app:1dctsprob}.

\subsection{Q-Learning}
Q-learning arises as a fixed point iteration to the Bellman optimality equation in discrete time
\begin{align*}
Q^{n+1}(x_t,u_t) 
=& (1-\alpha)Q^n(x_t,u_t) + \alpha\bigg[f(x_t,u_t) + \gamma \min_{\hat{u}}Q^n(x_{t+1}, \hat{u})\bigg],
\end{align*}
where \(f(x,u)\) denotes the instantaneous reward function (Appendix~\ref{app:1ddisprob}).
We see the coefficients are non-negative for \(0\leq \alpha \leq 1\); within this range, the update step is monotone. This aligns with the usual range for the step size \(\alpha\) in Q-learning. We explore larger values outside this range analogous to over-relaxation in optimisation \citep{saad2003iterative}. 
In our experiments, Q-learning is seen to be stable against the theoretical limits for a deterministic LQ problem when \(0\leq \alpha \leq 1\) (Figure~\ref{fig:0.8}). We converge to the correct value function, and the differences in policy are due to discretisation error. Since monotonicity is a sufficient condition for convergence, having \(\alpha\) outside of this range does not necessitate the method breaking. In Figure~\ref{fig:1.3} we see that we still converge to the theoretical values for \(\alpha = 1.3\). Instability occurs when \(\alpha\) is sufficiently large, \(\alpha = 1.8\) will do (Figure~\ref{fig:1.8}). 

\begin{figure*}[ht!]
    \centering
    \begin{subfigure}[t]{0.5\textwidth}
        \centering
        \includegraphics[height=1.7in]{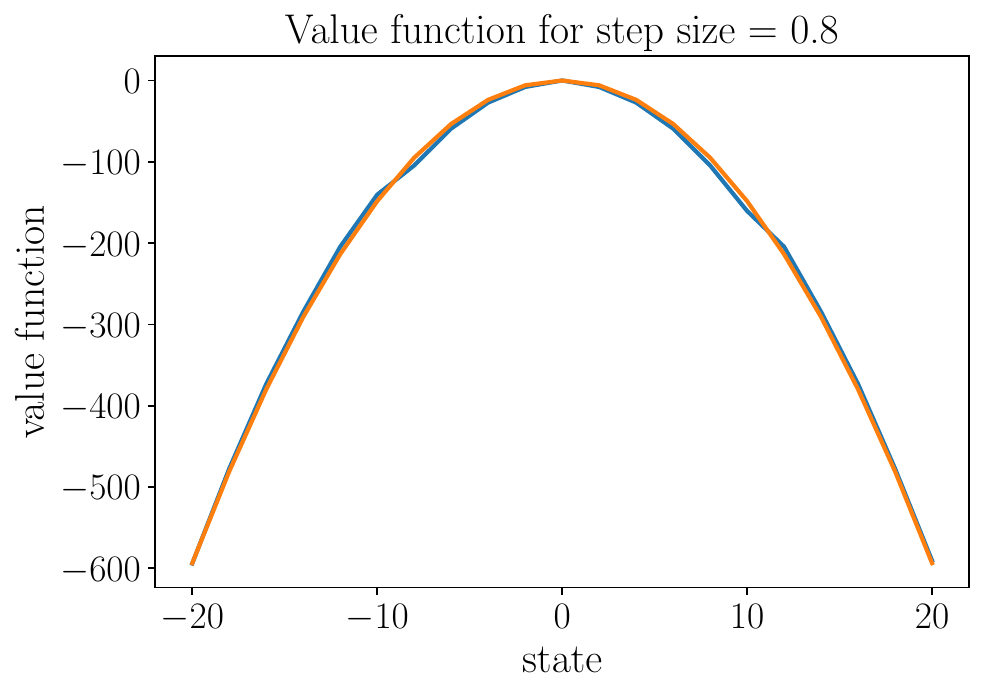}
    \end{subfigure}%
    \begin{subfigure}[t]{0.5\textwidth}
        \centering
        \includegraphics[height=1.7in]{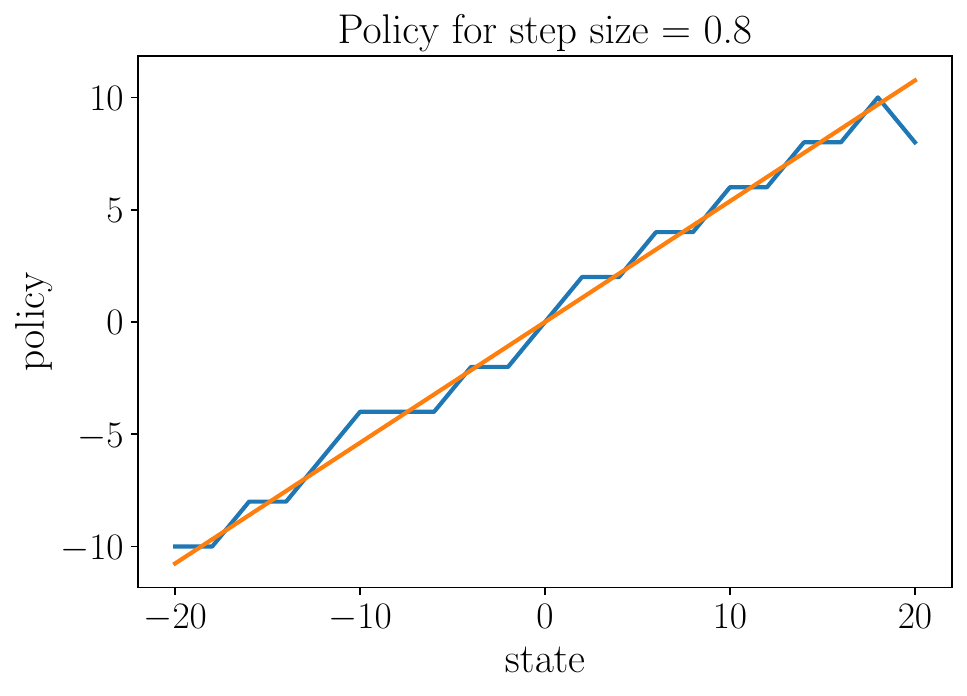}
    \end{subfigure}
    \caption{Q-learning: learnt value function and policy (blue) against theoretical (orange) for $\alpha = 0.8$}\label{fig:0.8}
\end{figure*}
\begin{figure*}[ht!]
    \centering
    \begin{subfigure}[t]{0.5\textwidth}
        \centering
        \includegraphics[height=1.7in]{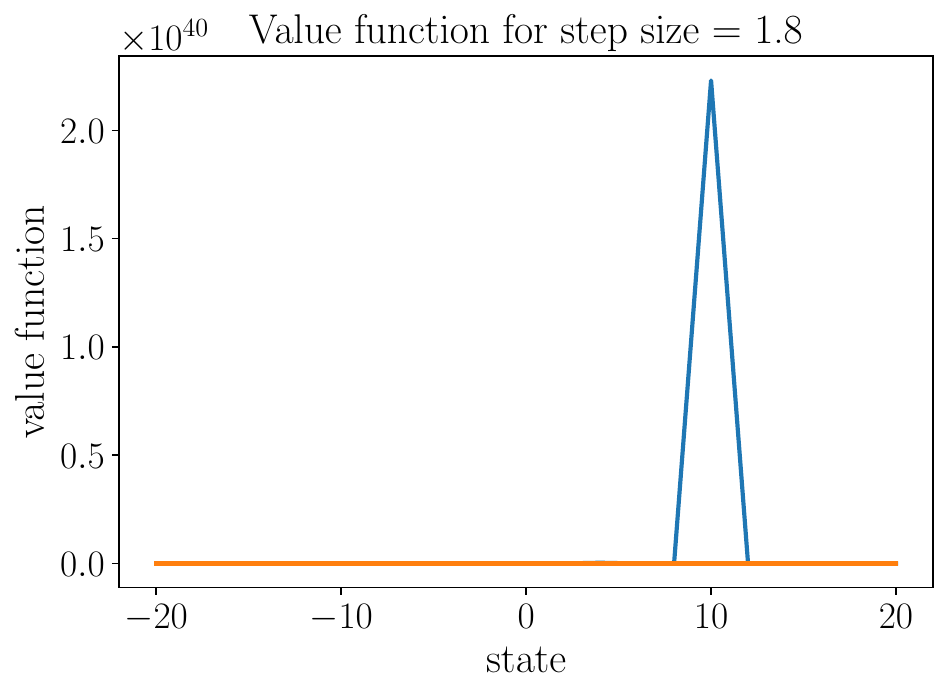}
    \end{subfigure}%
    \begin{subfigure}[t]{0.5\textwidth}
        \centering        \includegraphics[height=1.7in]{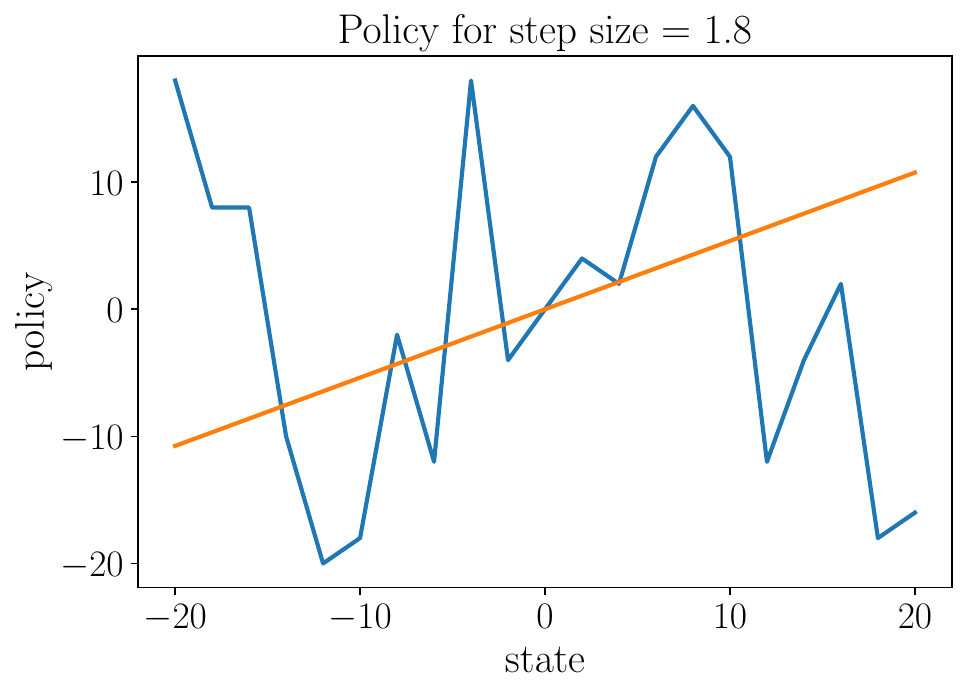}
    \end{subfigure}
    \caption{Q-learning: learnt value function and policy (blue) against theoretical (orange) for $\alpha = 1.8$} \label{fig:1.8}
\end{figure*}

Ensuring monotonicity with a function approximator 
is non-trivial. 
In the LQ case, note 
\(Q^*(x_t, u) = f(x_t, u) + V^*(x_{t+1})\)
and \(f\) and \(V\) can be expressed as a quadratic function in \(x\) and \(u\) (Appendix~\ref{app:1dctsprob}). Therefore
a linear function approximator for \(Q(x,u)\) with features of terms up to quadratic powers in \(x\) and \(u\) will be a suitable function class.
To be precise, 
\(\tilde{Q}(x, u, w) = X(x, u)^\intercal w\),
where \(X(x,u)\) are the features that we extract from our state-action pair, and \(w\) are the weights. Then (Appendix~\ref{app:1ddisprob})
\begin{align*}
\tilde{Q}^{n+1}(x, u,w_{n+1})
 =& \:\alpha_n f(x,u) X^\intercal(x, u)X(x, u) + (1-\alpha_n X^\intercal(x, u) X(x, u)) \tilde{Q}^n(x,u,w_n) \nonumber\\&+ \alpha_n X^\intercal(x, u) X(x, u) \max_{\tilde{u}} \tilde{Q}^n(x',\tilde{u},w_n). 
\end{align*}
To ensure monotonicity, the features \(X(x,u)\) need to be bounded and step sizes are sufficiently small such that \(\alpha_n X^\intercal(x, u) X(x, u) <1\), i.e. \(\alpha_n(x,u) < 1/ (X^\intercal(x, u) X(x, u)) \). This condition is dependent on the state and action, so a potential issue is that we may not sufficiently explore the state space (e.g. if for large values of \(x\), the action \(u\) is also large then \(\alpha\) needs to be very small). Even a simple, linear function approximator can disrupt monotonicity causing instability so violations in the nonlinear case (neural networks) 
may explain the stability issues we observe in practice.

\subsubsection*{Acknowledgements}
The author would like to thank Prof Samuel N. Cohen and Dr Jaroslav Fowkes for their support and feedback. This work was supported by the EPSRC [EP/L015803/1].

\subsubsection*{URM Statement}
The authors acknowledge that at least one key author of this work meets the URM criteria of ICLR 2024 Tiny Papers Track.

\bibliography{refs}
\bibliographystyle{iclr2023_conference_tinypaper}

\newpage

\appendix

\section{1D continuous LQ problem} \label{app:1dctsprob}

Consider the typical deterministic, infinite-horizon, discounted control problem with linear state dynamics
\begin{equation}\label{eqn:dx}
\frac{dX_t^{x,u}}{dt} = b(X_t^{x,u}, u_t), \quad  b(X_t^{x,u}, u_t) = \alpha X_t^{x,u} +u_t, \quad X^{x,u}(0) = x,
\end{equation}
where \(\{X^{x,u}\}_t\) is the process starting at \(x\) and following the policy \(u\) thereafter.
Let the cost function be given by
\begin{equation}\label{eqn:cost}
J(u_s; X_s) = \int_0^\infty e^{-\beta s} f(X^{x,u}_s,u_s)ds,
\end{equation}
where
\begin{equation} \nonumber
f(X^{x,u}_t,u_t) = (X_t^{x,u})^2+u_t^2.
\end{equation}
Define the value function 
\begin{equation}\label{eqn:value}
V(x) = \inf_u J(u; x).
\end{equation}

\begin{lemma}
The dynamic programming principle gives us
\begin{equation}\nonumber
V(x) = \inf_u  \bigg \{\int_0^h e^{-\beta s} \big((X_s^{x,u})^2+u_s^2\big)ds + e^{-\beta h} V(X_h^{x,u})\bigg \}.
\end{equation}
\end{lemma}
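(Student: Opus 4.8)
The plan is to establish the two inequalities that together pin $V(x)$ to the right-hand side; write $W(x)$ for the latter, i.e.
\[
W(x) \;=\; \inf_u \Big\{ \int_0^h e^{-\beta s} f(X_s^{x,u},u_s)\,ds + e^{-\beta h} V(X_h^{x,u}) \Big\}.
\]
The structural ingredient is the flow (semigroup) property of the controlled dynamics. Since $b$ in \eqref{eqn:dx} is linear in $(x,u)$ and hence Lipschitz in the state, \eqref{eqn:dx} has a unique solution for each admissible control, so for every $t\ge 0$ the trajectory satisfies $X_{h+t}^{x,u} = X_t^{\,y,\,\theta_h u}$, where $y := X_h^{x,u}$ and $\theta_h u := u(h+\cdot)$ is the time-shifted control. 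Substituting $s = h+r$ in \eqref{eqn:cost} then gives the decomposition
\[
J(u;x) \;=\; \int_0^h e^{-\beta s} f(X_s^{x,u},u_s)\,ds \;+\; e^{-\beta h}\, J\big(\theta_h u;\, X_h^{x,u}\big),
\]
which is the identity that drives both directions.

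For $V(x) \ge W(x)$: fix any admissible control $u$ on $[0,\infty)$. By the definition \eqref{eqn:value} of the value function applied at the point $X_h^{x,u}$ we have $J(\theta_h u; X_h^{x,u}) \ge V(X_h^{x,u})$, so the decomposition above gives $J(u;x) \ge \int_0^h e^{-\beta s} f(X_s^{x,u},u_s)\,ds + e^{-\beta h} V(X_h^{x,u}) \ge W(x)$. Taking the infimum over $u$ yields $V(x) \ge W(x)$.

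For $V(x) \le W(x)$: fix $\varepsilon > 0$ and an arbitrary admissible control $u^1$ on $[0,h]$, with endpoint $y = X_h^{x,u^1}$. Choose an $\varepsilon$-optimal control $u^2$ for the problem started at $y$, so that $J(u^2;y) \le V(y) + \varepsilon$, and let $u$ be the concatenation equal to $u^1$ on $[0,h)$ and to $t \mapsto u^2(t-h)$ on $[h,\infty)$. Then $u$ is admissible, its trajectory on $[0,h]$ coincides with that of $u^1$, and the decomposition gives
\[
V(x) \;\le\; J(u;x) \;=\; \int_0^h e^{-\beta s} f(X_s^{x,u^1},u_s^1)\,ds + e^{-\beta h} J(u^2;y) \;\le\; \int_0^h e^{-\beta s} f(X_s^{x,u^1},u_s^1)\,ds + e^{-\beta h}\big(V(y) + \varepsilon\big).
\]
Taking the infimum over $u^1$ and then letting $\varepsilon \downarrow 0$ gives $V(x) \le W(x)$, which together with the previous step proves the claim.

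The step I expect to be the main obstacle is the concatenation in the second inequality: making it fully rigorous requires either a measurable selection of the $\varepsilon$-optimal control $u^2$ as a function of the endpoint $y$, or --- as arranged above --- the observation that $u^1$ is fixed before $u^2$ is chosen and the infimum over $u^1$ is taken only at the end, so $u^2$ may be selected pointwise for each $u^1$ and the selection theorem is avoided entirely. It is also worth confirming at the outset that $V(x)$ is finite (otherwise the inequalities are vacuous), which for this LQ problem holds under a stabilisability/discount condition guaranteeing $J(u;x) < \infty$ for at least one admissible $u$; the remaining ingredients --- existence and uniqueness for \eqref{eqn:dx} and the change of variables in \eqref{eqn:cost} --- are routine given the linearity of $b$ and the boundedness of the discount factor.
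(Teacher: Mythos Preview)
Your argument is correct and is in fact the standard rigorous proof of the dynamic programming principle: establish the two inequalities separately, using the flow property for the ``$\ge$'' direction and an $\varepsilon$-optimal concatenation for the ``$\le$'' direction. Your remark that fixing $u^1$ before choosing $u^2$ sidesteps any measurable-selection issue is exactly right in this deterministic setting.

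The paper takes a different, purely formal route: it writes a chain of equalities starting from $V(x) = \inf_u \int_0^\infty e^{-\beta s} f\,ds$, splits the integral at $h$, changes variables $s = t+h$ in the tail, and recognises the inner infimum as $V(X_h^{x,u})$. This is shorter but looser: as written, the second line separates one infimum into a sum of two infima, which is only an inequality in general, and the dependence of the tail on $X_h^{x,u}$ (hence on the control over $[0,h]$) is handled implicitly rather than justified. Your two-inequality argument is what one would actually need to make that chain rigorous; the paper is giving the heuristic calculation and deferring the careful formulation to the cited reference of Krylov. So your approach buys rigour at the cost of a few more lines, while the paper's buys brevity at the cost of precision.
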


\begin{proof}
\begin{align*}
V(x) &= \inf_u \int_0^\infty e^{-\beta s} f(X^{x,u}_s,u_s)ds \\
&= \inf_u \int_0^h e^{-\beta s} f(X^{x,u}_s,u_s)ds + \inf_u \int_h^\infty e^{-\beta s} f(X^{x,u}_s,u_s)ds \\
&=\inf_u \int_0^h e^{-\beta s} f(X^{x,u}_s,u_s)ds + \inf_{\tilde{u}} \int_0^\infty e^{-\beta( t+h)} f(X^{X_h^{x,u},\tilde{u}}_t,\tilde{u}_t)dt  \\
& = \inf_u \int_0^h e^{-\beta s} f(X^{x,u}_s,u_s)ds + e^{-\beta h}\inf_{\tilde{u}} \int_0^\infty e^{-\beta t} f(X^{X_h^{x,u},\tilde{u}}_t,\tilde{u}_t)dt \\
&= \inf_u \int_0^h e^{-\beta s} f(X^{x,u}_s,u_s)ds + e^{-\beta h} V(X_h^{x,u}).
\end{align*}
\end{proof}
See \citep{krylov_controlled_1980} for a rigorous formulation of this problem detailing the set of admissible controls and growth conditions assumed.
From this form of the value function, we can derive the Hamilton--Jacobi--Bellman (HJB) equation. 

\begin{lemma}
The HJB equation of the system given by (\ref{eqn:dx}) and (\ref{eqn:cost}) is 
\begin{equation}\label{eqn:hj}
-\beta V(x)+\inf_u \big\{ \partial_x V(x) \cdot b(x,u) + f(x,u) \big\} = 0.
\end{equation}
\end{lemma}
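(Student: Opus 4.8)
The plan is to derive (\ref{eqn:hj}) from the dynamic programming principle of the preceding lemma, working formally under the assumption that $V$ is $C^1$; the fully rigorous statement, in which $V$ is only a viscosity solution, is deferred to \citep{krylov_controlled_1980}. Write the Hamiltonian $H(x,p) = \inf_u\{\, p\, b(x,u) + f(x,u)\,\}$, so that the claim is exactly $-\beta V(x) + H(x,\partial_x V(x)) = 0$. I would prove the two inequalities ``$\ge 0$'' and ``$\le 0$'' separately, in each case dividing the dynamic programming relation by $h$ and letting $h \downarrow 0$.

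For the ``$\ge 0$'' direction, fix $x$ and take an arbitrary \emph{constant} control $u_s \equiv u$ on $[0,h]$, which is sub-optimal in the dynamic programming principle, giving
\[
V(x) \le \int_0^h e^{-\beta s} f(X_s^{x,u},u)\,ds + e^{-\beta h} V(X_h^{x,u}).
\]
Then I would expand each piece in $h$ about $0$: from (\ref{eqn:dx}), $X_h^{x,u} = x + h\,b(x,u) + o(h)$; from the $C^1$ assumption, $V(X_h^{x,u}) = V(x) + h\,\partial_x V(x)\,b(x,u) + o(h)$; also $e^{-\beta h} = 1 - \beta h + o(h)$ and $\int_0^h e^{-\beta s} f(X_s^{x,u},u)\,ds = h\,f(x,u) + o(h)$. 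Substituting, cancelling $V(x)$, dividing by $h > 0$ and sending $h \downarrow 0$ gives $0 \le \partial_x V(x)\,b(x,u) + f(x,u) - \beta V(x)$ for every $u$; since $\beta V(x)$ does not depend on $u$, this is $0 \le H(x,\partial_x V(x)) - \beta V(x)$.

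For the ``$\le 0$'' direction, for each $h > 0$ choose a control $u^h$ that is $\eps h$-optimal in the dynamic programming principle. Rather than expanding the unknown trajectory directly, I would use the identity $e^{-\beta h} V(X_h^{x,u^h}) - V(x) = \int_0^h e^{-\beta s}\big(\partial_x V(X_s^{x,u^h})\,b(X_s^{x,u^h},u^h_s) - \beta V(X_s^{x,u^h})\big)\,ds$, insert it into the near-optimality inequality, and bound the integrand below by $H(X_s^{x,u^h},\partial_x V(X_s^{x,u^h})) - \beta V(X_s^{x,u^h})$. Dividing by $h$, letting $h \downarrow 0$, and using continuity of $x \mapsto H(x,\partial_x V(x)) - \beta V(x)$ together with $\sup_{s\le h}|X_s^{x,u^h} - x| \to 0$, the time average converges to $H(x,\partial_x V(x)) - \beta V(x)$, yielding $0 \ge H(x,\partial_x V(x)) - \beta V(x)$. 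Combining the two bounds gives (\ref{eqn:hj}).

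The main obstacle is the last step of the ``$\le 0$'' direction: a priori the near-optimal controls $u^h$ need not converge, or even remain bounded, as $h \downarrow 0$, so one must first extract from the dynamic programming inequality an energy bound (the quadratic running cost controls $\int_0^h (u^h_s)^2\,ds$), and then combine it with Gr\"onwall's inequality applied to (\ref{eqn:dx}) to conclude $\sup_{s \le h}|X_s^{x,u^h}-x| \to 0$. This is precisely the point where, if $V$ fails to be $C^1$, the classical computation must be replaced by the viscosity-solution definition (testing against smooth functions touching $V$ from above and below); at the level of this appendix it suffices to invoke the admissibility and growth hypotheses of \citep{krylov_controlled_1980} under which the formal argument is valid.
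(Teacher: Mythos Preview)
Your proposal is correct and follows essentially the same route as the paper: split into two inequalities, obtain the ``$\ge 0$'' direction by plugging a constant control into the dynamic programming principle, dividing by $h$, and letting $h\downarrow 0$, and then argue the reverse inequality via (near-)optimal controls. The only difference is one of rigor in the ``$\le 0$'' step: the paper simply asserts that repeating the computation with ``the optimal control'' gives equality, whereas you work with $\eps h$-near-optimal controls and explicitly handle the uniform convergence $\sup_{s\le h}|X_s^{x,u^h}-x|\to 0$ via the quadratic-cost energy bound and Gr\"onwall---so your treatment is in fact the more careful of the two.
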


\begin{proof}
Let us first consider a constant control \(\bar{u}\). By the definition of the value function, we must have
\begin{equation}\nonumber
V(x) \leq \int_0^h e^{-\beta s} f(X^{x,\bar{u}}_s,\bar{u})ds + e^{-\beta h} V(X_h^{x,\bar{u}}).
\end{equation}
Now by the chain rule we have
\begin{equation}\nonumber
V(X_h^{x,\bar{u}}) = V(x) + \int_0^h b(X_t^{x, \bar{u}}, \bar{u}) \partial_x V(X_t^{x,\bar{u}}) dt,
\end{equation}
therefore
\begin{equation}\nonumber
V(x) \leq \int_0^h e^{-\beta s} f(X^{x,\bar{u}}_s,\bar{u})ds + e^{-\beta h} \bigg( V(x) + \int_0^h b(X_t^{x, \bar{u}}, \bar{u}) \partial_x V(X_t^{x,\bar{u}}) dt\bigg).
\end{equation}
Upon rearranging, we have
\begin{equation}\nonumber
\frac{1-e^{-\beta h}} {h} V(x) \leq \frac{1}{h}\int_0^h e^{-\beta s} f(X^{x,\bar{u}}_s,\bar{u})ds + \frac{e^{-\beta h}}{h}  \int_0^h b(X_t^{x, \bar{u}}, \bar{u}) \partial_x V(X_t^{x,\bar{u}}) dt.
\end{equation}

We consider the limit as \(h\) tends to 0. By L'H\^{o}pital's rule
\begin{equation}\nonumber
\lim_{h\to 0} \frac{1-e^{-\beta h}}{h} = \lim_{h\to 0} \frac{\beta e^{-\beta h}}{1} = \beta,
\end{equation}
and by applying the Mean Value Theorem, we obtain
\begin{equation}\nonumber
\beta V(x) \leq f(x, \bar{u}) + b(x, \bar{u}) \partial_x V(x).
\end{equation}
Therefore, for an arbitrary constant cost \(\bar{u}\) we have
\begin{equation}\nonumber
-\beta V(x) + f(x, \bar{u}) + b(x, \bar{u}) \partial_x V(x) \geq 0,
\end{equation}
thus
\begin{equation}\nonumber
-\beta V(x) + \inf_u\big[f(x, {u}) + b(x, {u}) \partial_x V(x)\big] \geq 0.
\end{equation}
If we apply the above analysis with the optimal control, we will find that equality holds
\begin{equation}\nonumber
-\beta V(x) + \big[f(x, {u}^*) + b(x, {u}^*) \partial_x V(x)\big] = 0,
\end{equation}
and therefore
\begin{equation}\nonumber
-\beta V(x) + \inf_u\big[f(x, {u}) + b(x, {u}) \partial_x V(x)\big] = 0
\end{equation}
as required.
\end{proof}

We can find the optimal feedback control of (\ref{eqn:hj}).
 
\begin{lemma}
The optimal control is given by
\begin{equation}\nonumber
u^* = -\Gamma x.
\end{equation}
\end{lemma}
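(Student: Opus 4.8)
The plan is to use the quadratic structure of the data $b(x,u) = \alpha x + u$ and $f(x,u) = x^2 + u^2$ to collapse the HJB equation (\ref{eqn:hj}) first to a nonlinear first-order ODE and then to a scalar Riccati-type equation. First I would carry out the inner minimisation in (\ref{eqn:hj}) explicitly: the bracketed expression $\partial_x V(x)\,(\alpha x + u) + x^2 + u^2$ is a strictly convex quadratic in $u$, so the first-order condition $2u + \partial_x V(x) = 0$ yields the candidate feedback $u^*(x) = -\tfrac12\,\partial_x V(x)$. Substituting this back reduces (\ref{eqn:hj}) to
\[
-\beta V(x) + \alpha x\,\partial_x V(x) - \tfrac14\big(\partial_x V(x)\big)^2 + x^2 = 0 .
\]

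Next I would solve this ODE inside the natural function class for an LQ problem. Since the dynamics and the cost are even in $x$ and $V(0)=0$, it is natural to seek $V(x) = \Gamma x^2$ with $\Gamma \ge 0$. Plugging $\partial_x V(x) = 2\Gamma x$ into the reduced equation and matching the coefficient of $x^2$ turns it into the scalar quadratic $\Gamma^2 + (\beta - 2\alpha)\Gamma - 1 = 0$, whose product of roots is $-1$, so it has exactly one positive root, $\Gamma = \tfrac12\big((2\alpha-\beta) + \sqrt{(2\alpha-\beta)^2 + 4}\,\big)$; this root is the one forced by $V \ge 0$. Then $u^*(x) = -\tfrac12\,\partial_x V(x) = -\Gamma x$, which is the asserted linear feedback.

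Finally I would close the loop with a verification argument. One checks that $V(x) = \Gamma x^2$ is $C^1$, satisfies (\ref{eqn:hj}), and that the feedback $u^*(x) = -\Gamma x$ is admissible: the closed-loop dynamics are $\dot X_t = (\alpha - \Gamma) X_t$, so $X_t = x e^{(\alpha-\Gamma)t}$, and since $\sqrt{(2\alpha-\beta)^2+4} > 0$ forces $2(\alpha - \Gamma) - \beta < 0$, the discounted integral in (\ref{eqn:cost}) converges, and a short computation using $\Gamma^2 + (\beta-2\alpha)\Gamma - 1 = 0$ shows it equals $\Gamma x^2 = V(x)$. A standard verification theorem (see \citep{krylov_controlled_1980}) then confirms that $V$ is the value function (\ref{eqn:value}) and that $u^* = -\Gamma x$ is optimal.

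I expect the only genuine obstacle to be this last verification step: making the ``guess and check'' rigorous requires the admissibility bound above together with the verification theorem, whereas the minimisation over $u$ and the reduction to the scalar quadratic are routine.
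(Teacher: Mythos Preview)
Your argument is correct and follows essentially the same route as the paper: a quadratic ansatz for $V$, explicit minimisation over $u$, and coefficient matching to obtain the scalar equation $\Gamma^2 + (\beta-2\alpha)\Gamma - 1 = 0$, whence $u^* = -\Gamma x$. The only differences are organisational: the paper starts from the full quadratic $V(x)=\Gamma x^2 + 2\kappa x + \lambda$ and kills $\kappa,\lambda$ by matching the $x^1$ and $x^0$ coefficients, whereas you invoke evenness and $V(0)=0$ to go straight to $\Gamma x^2$; and you append a verification step (closed-loop stability via $2(\alpha-\Gamma)-\beta<0$ and appeal to a verification theorem) that the paper omits. Both choices are fine, and your added verification makes the optimality claim more complete.
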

\begin{proof}
Let us propose the following ansatz for (\ref{eqn:hj})
\begin{equation}\nonumber
V(x) = \Gamma x^2 + 2\kappa x + \lambda.
\end{equation}
We then find the derivative with respect to \(x\), \(dV/dx = 2\Gamma x + 2 \kappa\)  and substitute into the Hamiltonian to get
\begin{equation}\nonumber
\partial_x V(x)\cdot b(x,u) + f(x,u) = Qx^2 + Ru^2 + (2\Gamma x + 2\kappa)(Ax+Bu).
\end{equation}
By taking the partial derivative w.r.t \(u\) and setting to zero for the stationary point, we have that 
\begin{equation}\nonumber
2 Ru + B(2\Gamma x + 2K) = 0
\end{equation}
and the optimal control is given by
\begin{equation}\nonumber
u^* = -\frac{B(\Gamma x+ \kappa)}{R}.
\end{equation}
Evaluate the Hamiltonian at the optimal control
\begin{align*}
\partial_x V(x)\cdot b(x,u) + f(x,u)|_{u^*} &= Qx^2+ R\bigg(-\frac{B(\Gamma x+ \kappa)}{R} \bigg)^2 + (2\Gamma x + 2 \kappa)\bigg(Ax - \frac{B^2(\Gamma x+ \kappa)}{R} \bigg)\\
&= Qx^2 + \cancel{\frac{B^2(\Gamma x + \kappa)^2}{R}} + 2 A \Gamma x^2 + 2 A \kappa x - \frac{\cancel{2}B^2(\Gamma x + \kappa)^2}{R}\\
& = \bigg(Q+2A\Gamma  - \frac{B^2\Gamma ^2}{R}\bigg) + \bigg(2A\kappa - \frac{2B^2\Gamma \kappa}{R}\bigg) x - \frac{B^2\kappa^2}{R}.
\end{align*}

Substitute this into the HJB 
\begin{equation}\nonumber
\beta(\Gamma x^2 + \kappa x + \lambda) - \Bigg(\bigg(Q+2A\Gamma  - \frac{B^2\Gamma ^2}{R}\bigg) + \bigg(2A\kappa - \frac{2B^2\Gamma \kappa}{R}\bigg) x - \frac{B^2\kappa^2}{R} \Bigg) = 0
\end{equation}
and set each coefficient to zero
\begin{align}
0 &= \frac{B^2\Gamma ^2}{R} + \Gamma(\beta -2A) - Q, \label{eqn:Ox2} \\
0 &= 2\kappa(\beta + \frac{B^2\Gamma}{R}-A),\label{eqn:Ox}\\
0 &= \beta \lambda + \frac{B^2 \kappa^2}{R}\label{eqn:O1}.
\end{align}
From (\ref{eqn:Ox}), we see that either \(\kappa = 0\) or \(\Gamma = \frac{R(A-\beta)}{B^2}\).
The latter case would not generally satisfy (\ref{eqn:Ox2}). Thus we must have \(\kappa = 0\). Substituting this into (\ref{eqn:O1}), we also get that \(\lambda = 0\). Thus the only non-zero coefficient of \(V(x)\) is \(\Gamma\), which satisfies the quadratic (\ref{eqn:Ox2}). We choose the root that ensures we have a stable solution (typically positive definite).

For our problem (\ref{eqn:dx}) we have that \(A = \alpha\), \(B = R = Q =1\), hence \(\Gamma\) must satisfy
\begin{equation}
\Gamma^2 + (\beta -2 \alpha)\Gamma-1 = 0,
\end{equation}
which has two roots \(\Gamma^+\) and \(\Gamma^-\). We choose the root that would result in a positive eigenvalue. 
Our optimal control is then \(u^* = -\Gamma x\).
\end{proof}

If we want to solve the HJB (\ref{eqn:hj}) numerically, we can rearrange the formula
to get a fixed point method
\begin{equation}
V^{n+1} = \frac{\beta+\gamma}{\gamma} V^n + \frac{1}{\gamma} \min_u  \bigg\{ \partial_x V^n \cdot (\alpha x +u) + x^2 +u^2 \bigg\}.
\end{equation}

If we na\"{i}vely tried to solve this directly by approximating the derivative with finite differences, for example, central difference, then we obtain the following numerical method
\begin{equation}
V_i^{n+1} = \frac{\beta+\gamma}{\gamma} V_i^n + \frac{1}{\gamma} \min_u  \bigg\{ \frac{V^n_{i+1}-V^n_{i-1}}{2\Delta x} \cdot (\alpha x_i +u) + x_i^2 +u^2 \bigg\},
\end{equation}
where \(V_i = V(x_i)\) and \(x_i = x_{i-1}+\Delta x\), and we can take forward and backward differencing at the boundary.

This method is not monotone in general as the coefficients of \(V_{i+1}^n\) and \(V_{i-1}^n\) are the opposite signs to each other.

To obtain a monotone method we look at an upwind scheme. Let us approximate the derivative w.r.t. \(x\) with forward differences or backward differences depending on the sign of \(\alpha x +u\)
\begin{equation} \nonumber
\partial_x V \approx \frac{V_{i+1}-V_{i}}{\Delta x} \quad \text{if }\:\: \alpha x + u>0
\end{equation}
\begin{equation} \nonumber
\partial_x V \approx \frac{V_{i}-V_{i-1}}{\Delta x} \quad \text{if }\:\: \alpha x + u<0.
\end{equation}

For \(\alpha x + u>0\),
\begin{align*}
V_i^{n+1} &= \frac{\beta+\gamma}{\gamma} V_i^n + \frac{1}{\gamma} \min_u  \bigg\{ \frac{V^n_{i+1}-V^n_{i}}{\Delta x} (\alpha x_i +u) + x_i^2 +u^2 \bigg\}\\
& = \bigg(\frac{\beta+\gamma}{\gamma} -\frac{\alpha x_i +u^*}{\gamma \Delta x} \bigg)V_i^n + \frac{\alpha x_i +u^*}{\gamma \Delta x} V^n_{i+1}+ \frac{x_i^2 +(u^*)^2}{\gamma},
\end{align*}
where \(u^*\) is the argmin of the Hamiltonian.
Now the coefficient of \(V_{i+1}^n\) is positive, but the coefficient of \(V_{i}^n\) will only be positive if we have
\begin{equation}
\Delta x > \frac{\alpha x_i + u^*}{\beta + \gamma}.
\end{equation}
Hence taking a large value for \(\gamma\) enables us to use finer meshes.

Similarly when \(\alpha x + u<0\),
\begin{align*}
V_i^{n+1} &= \frac{\beta+\gamma}{\gamma} V_i^n + \frac{1}{\gamma} \min_u  \bigg\{ \frac{V^n_{i}-V^n_{i-1}}{\Delta x} (\alpha x_i +u) + x_i^2 +u^2 \bigg\}\\
& = \bigg(\frac{\beta+\gamma}{\gamma} +\frac{\alpha x_i +u^*}{\gamma \Delta x} \bigg)V_i^n - \frac{\alpha x_i +u^*}{\gamma \Delta x} V^n_{i-1}+ \frac{x_i^2 +(u^*)^2}{\gamma}.
\end{align*}
Since \(\alpha x + u<0\), the coefficient of \(V_{i-1}^n\) is positive, but the coefficient of \(V_{i}^n\) will only be positive if we have
\begin{equation}
\Delta x > -\frac{\alpha x_i + u^*}{\beta + \gamma}.
\end{equation}

Note that by updating our value function as
\begin{equation} \nonumber
V^{n+1} = \frac{\beta+\gamma}{\gamma} V^n + \frac{1}{\gamma}\min_u\{\partial_x V^{n}\cdot (ax+u) + (x^2+u^2)\},
\end{equation}
this is precisely the value iteration updates. We only do one cycle of policy evaluation before choosing a new policy by taking \(u\) to be the argmin of the Hamiltonian (policy improvement). A monotone numerical scheme for the LQ problem with value iteration updates is described in Algorithm~\ref{algo:lq_value}.

\begin{algorithm}[H]
\SetAlgoLined
 1. Initialisation \\
 Discretise the state and action spaces \\
Parameters: a small threshold $\theta > 0$ determining accuracy of estimation (convergence criterion), a maximum iteration count \(N\)\\
Initialize $V(x)$ as zeros\\ 
Initialize $\Delta > \theta$. \\

 2. Iteration \\
 \Repeat{$\Delta<\theta$ or $N$}{
  
  \For{each $x\in \mathcal{X}$}{
  $\hat{V}(x)\leftarrow V(x)$,\\
  $ u^* \leftarrow$ argmin Hamiltonian at \(x\)\\
  $V(x) \leftarrow \frac{\beta +\gamma}{\gamma} \hat{V}(x) + \frac{1}{\gamma}H(x, u^*, \hat{V} )$,\\
  where $H(x, u^*, \hat{V} )$ is the Hamiltonian evaluated at \(x\), with control \(u^*\) and using the suitable differencing for a monotone scheme.\\
  }
  $\Delta \leftarrow \max|\hat{V}-V|$.\\
  }
 
 \caption{Value Iteration for LQ}\label{algo:lq_value}
\end{algorithm}

For a policy iteration-like update, we need to have a fixed policy that we evaluate the value function on until convergence before we make a policy improvement step. The pseudocode is given in Algorithm~\ref{algo:lq_policy} 

\begin{algorithm}[h]
\SetAlgoLined
 1. Initialisation \\
 Discretise the state and action spaces \\
Parameters: two small thresholds $\theta_v > 0$ and $\theta_u > 0$ determining accuracy of estimation (convergence criterion), maximum iteration counts \(N_v, \:\: N_u\)\\
Initialize \(u(x)\in \mathcal{U}\) arbitrarily for all $x \in X $ (let us say equal to 1). \\ 
Initialize $\Delta > \theta$. \\

 2. Policy Evaluation \\
 \Repeat{$\Delta<\theta_v$}{
  
  \For{each $x\in \mathcal{X}$}{
  $\hat{V}(x)\leftarrow V(x)$,\\
  $V(x) \leftarrow \frac{\beta +\gamma}{\gamma} \hat{V}(x) + \frac{1}{\gamma}H(x, u, \hat{V} )$,\\
  where $H(x, u, \hat{V} )$ is the Hamiltonian evaluated at \(x\), with the current control \(u\) and using the suitable differencing for a monotone scheme.\\
  }
  $\Delta \leftarrow \max|\hat{V}-V|$.\\
  }
  3. Policy Improvement \\
  \For{each $x\in \mathcal{X}$}{
  \(\hat{u}(x) \leftarrow u(x)\)\\
  \(u(x) \leftarrow\)  argmin of Hamiltonian at \(x\) \\

  }
  If $ \max|\hat{u}-u|<\theta_u$, then stop and return $V\approx V^*$ and $u\approx u^*$; else go back to Step 2.
 \caption{Policy Iteration for LQ}\label{algo:lq_policy}
\end{algorithm}

Let us be more precise on finding the control. To recap, for the value iteration, we are updating at each iteration with the rule
\begin{equation} \nonumber
V^{n+1} = -\frac{\beta-\gamma}{\gamma} V^n + \frac{1}{\gamma}\min_u\Big\{\partial_x V^{n}\cdot (\alpha x+u) + (x^2+u^2)\Big\}.
\end{equation}
In order to obtain a monotone scheme, we must apply forward differencing on \(\partial_x V^n\) if \(\alpha x+u>0\) or backward differencing otherwise. 
However, for value iteration, we are also minimising over all \(u\), which means that there are a few cases we can fall into. Let us consider finding the correct action for each discretised state \(x_i\).

We have the regions $ R_1 = \{u: \alpha x_i +u \geq 0\} $ and  $ R_1 = \{u: \alpha x_i +u <0\} $.
Let 
\begin{equation}\nonumber
H^n(u) = \begin{cases} h_1^n(u) \quad \text{if  } u\in R_1 \\
h_2^n(u) \quad \text{if  } u\in R_2
\end{cases}
\end{equation}
where
\begin{equation}\nonumber
h_1^n = \frac{V^n_{i+1}-V^n_{i}}{\Delta x} (\alpha x_i +u) + x_i^2 +u^2,
\end{equation}
and
\begin{equation}\nonumber
h_2^n = \frac{V^n_{i}-V^n_{i-1}}{\Delta x} (\alpha x_i +u) + x_i^2 +u^2.
\end{equation}
Our problem is now to find
\begin{equation}\nonumber
y = \min_u H^n(u).
\end{equation}
The smoothness of \(H(u)\) depends on the smoothness of the value function. In the case of the LQ problem, the value function is smooth, therefore \(H(u)\) is smooth except on the boundary of \(R_1\) and \(R_2\), which in this case is a linear boundary, \(u = -\alpha x_i\).

We can solve the above problem numerically, by finding 
\begin{equation}\nonumber
y^*_1 = \min_u h_1(u), \quad u\in R_1
\end{equation}
and
\begin{equation}\nonumber
y^*_2 = \min_u h_2(u), \quad u\in R_2
\end{equation}
then taking the minimum over these 2 values
\begin{equation}\nonumber
y = \min \{y_1,y_2\}
\end{equation}
to get the desired control for value iteration.

For policy iteration, when we are doing policy evaluation, as the policy is fixed, we just need to check the sign of \(\alpha x +u\). However, we need to also ensure that monotonicity is maintained in policy improvement. The policy improvement step can be described as
\begin{equation}\nonumber
u^* = \argmin_u H(u).
\end{equation}
In this case we can again solve the two minimisation problem separately to find \(y_1^*\) and \(y_2^*\) and choose \(u\) based on which of these have the lower value. 

In Figures~\ref{fig:downwind} and \ref{fig:upwind}, we see the result of applying the downwind iteration and upwind iteration respectively. 
We see clearly the instability arising in the downwind case, and how important it is to choose between forward and backward difference so that we ensure monotonicity.

\begin{figure*}[t!]
    \centering
    \begin{subfigure}[t]{0.5\textwidth}
        \centering
        \includegraphics[height=1.682in]{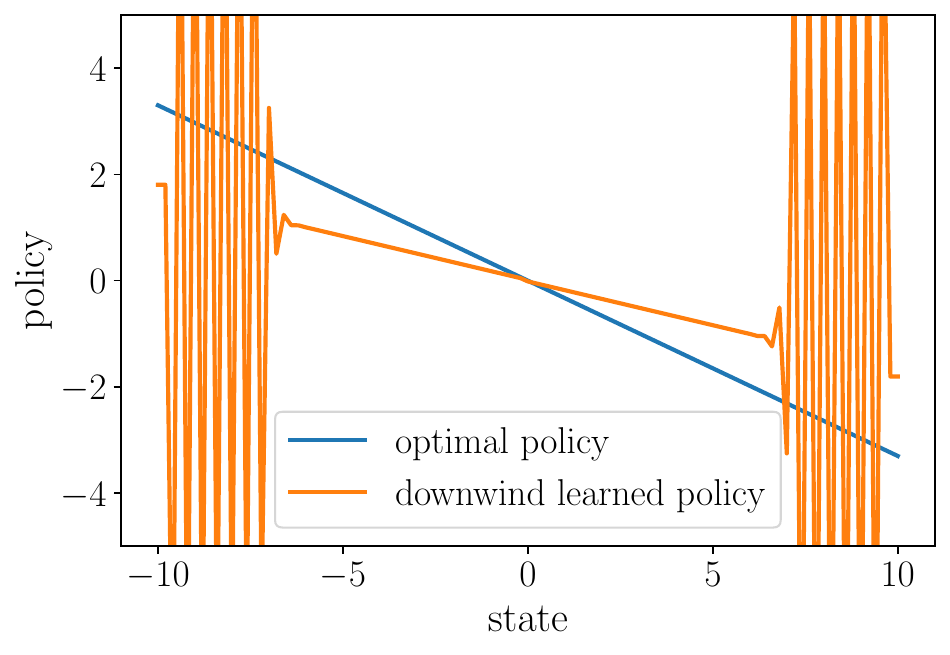}
    \end{subfigure}%
    \begin{subfigure}[t]{0.5\textwidth}
        \centering
        \includegraphics[height=1.682in]{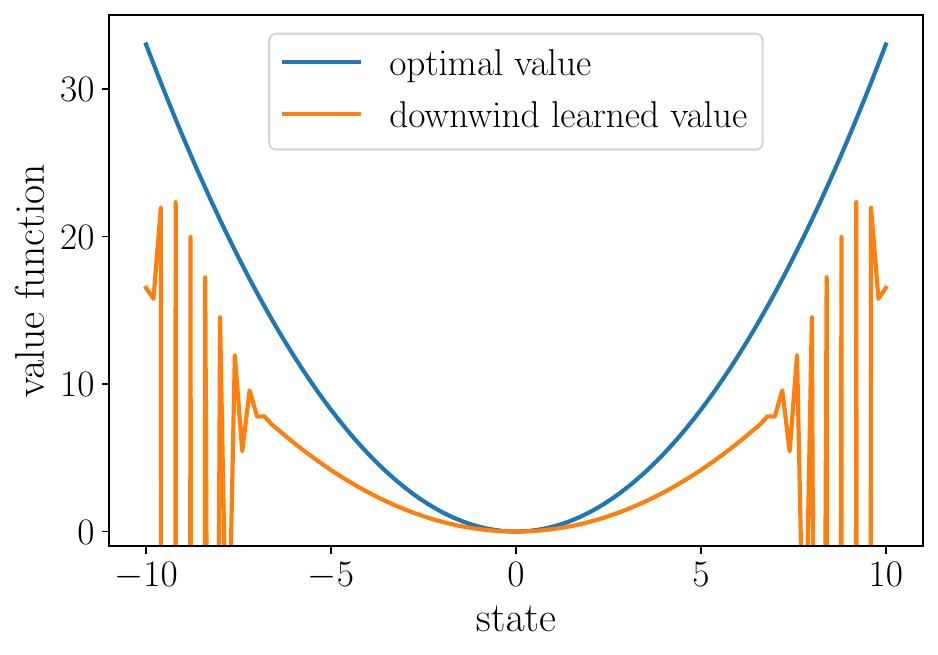}
    \end{subfigure}
    \caption{An intermediate policy and value function for a downwind method (the policy and value function have not converged yet). Instability forms and becomes amplified with further iterations.} \label{fig:downwind}
\end{figure*}

\begin{figure*}[t!]
    \centering
    \begin{subfigure}[t]{0.5\textwidth}
        \centering
        \includegraphics[height=1.682in]{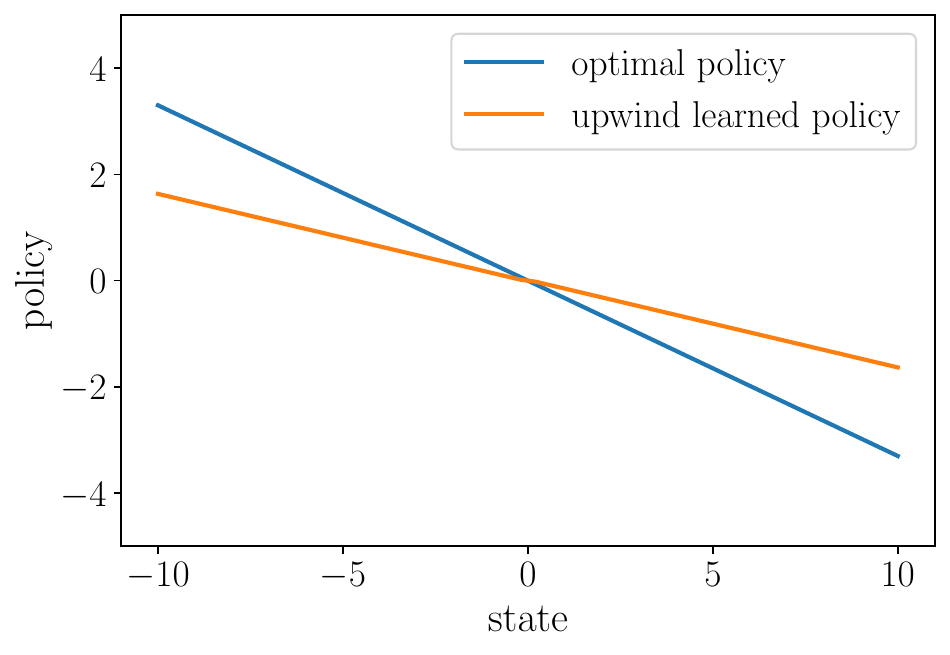}
    \end{subfigure}%
    \begin{subfigure}[t]{0.5\textwidth}
        \centering
        \includegraphics[height=1.682in]{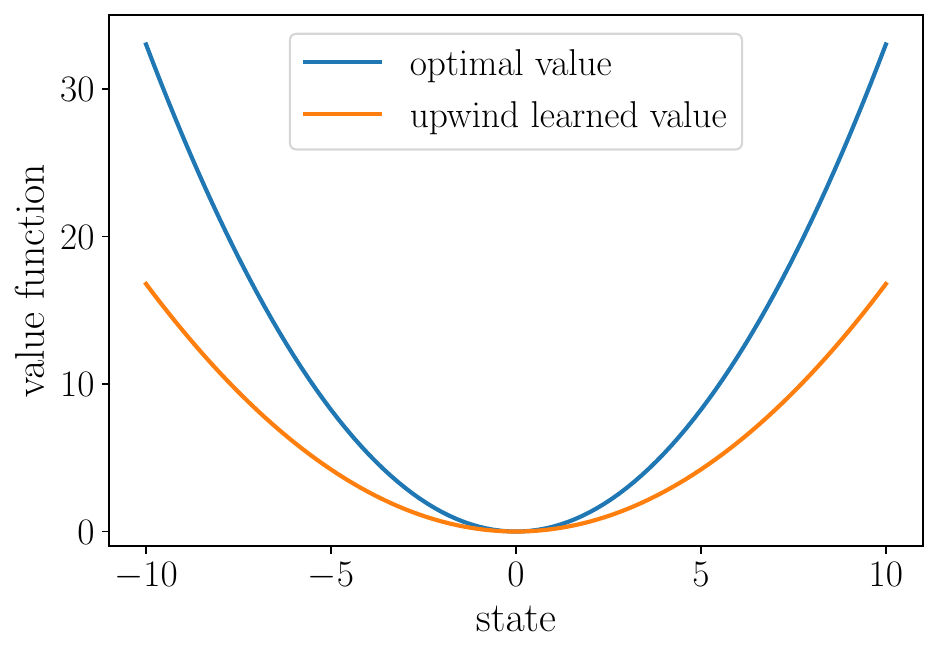}
    \end{subfigure}
    \caption{An intermediate policy and value function for an upwind method. Note that whilst the policy has not converged yet, there are no instabilities in this case.} \label{fig:upwind}
\end{figure*}

\section{Q-learning (discrete setting)}\label{app:1ddisprob}

Let the transition be \(x_{t+1} = b(x_t,u_t)\)
and let \(f(x,u)\) denote the reward function. 
The state value function under policy \(\pi\) is defined as
\begin{equation}\nonumber
V^\pi(x) = \sum_{t=0}^\infty \gamma^t f(x_t, \pi(x_t)), \quad x_0 = x.
\end{equation}

The Bellman optimality equations are
\begin{equation}\nonumber
V^*(x_t) = \max_{u_t} \Big[ f(x_t, u_t) + \gamma V^*(x_{t+1})|_{x_{t+1} = b(x_t,u_t)} \Big],
\end{equation}
\begin{align*}
Q^*(x_t, u_t) &= f(x_t, u_t) + \gamma V^*(x_{t+1})|_{x_{t+1} = b(x_t,u_t)} \\
&= f(x_t, u_t) + \gamma \min_{\hat{a}} Q(x_{t+1}, \hat{a})|_{x_{t+1} = b(x_t,u_t)}.
\end{align*}

Thus the first order condition is given by
\begin{equation}\nonumber
\frac{\partial Q^*(s_{t+1},\hat{a})}{\partial \hat{a}} = 0
\end{equation}
or
\begin{equation}\nonumber
\frac{\partial f}{\partial u_t} + \gamma \frac{\partial V^*}{\partial x_{t+1}}\frac{\partial x_{t+1}}{\partial u_t} = 0.
\end{equation}

Monotonicity gives a sufficient condition for stability but it is not necessary. We see that Q-learning is also stable for \(\alpha=1.3\) case, as seen in Figure~\ref{fig:1.3}.

\begin{figure*}[t!]
    \centering
    \begin{subfigure}[t]{0.5\textwidth}
        \centering
        \includegraphics[height=1.8in]{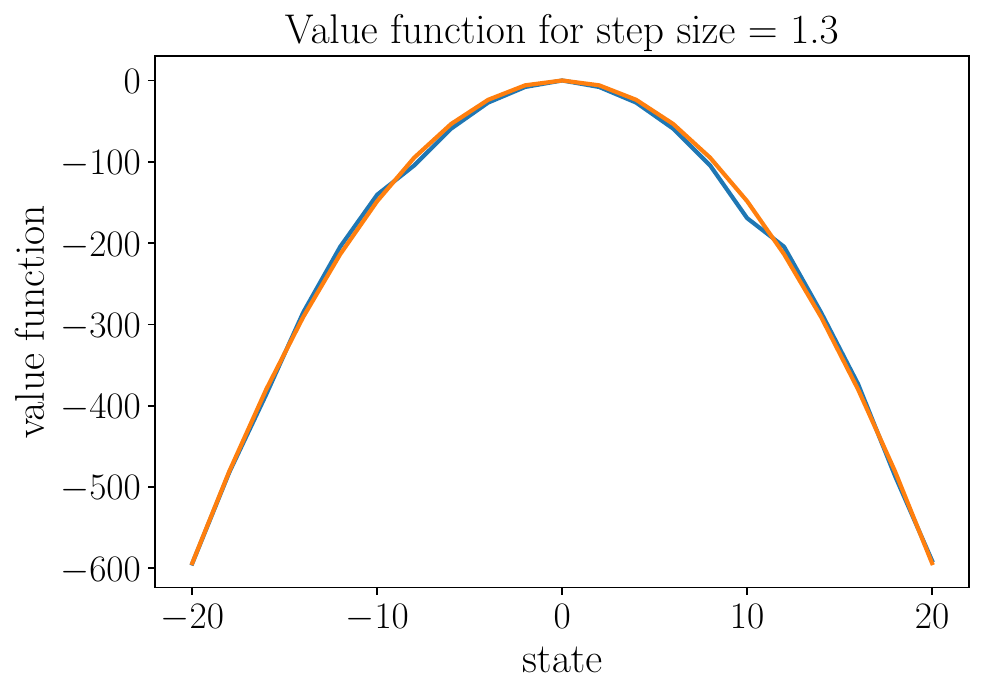}
    \end{subfigure}%
    \begin{subfigure}[t]{0.5\textwidth}
        \centering
        \includegraphics[height=1.8in]{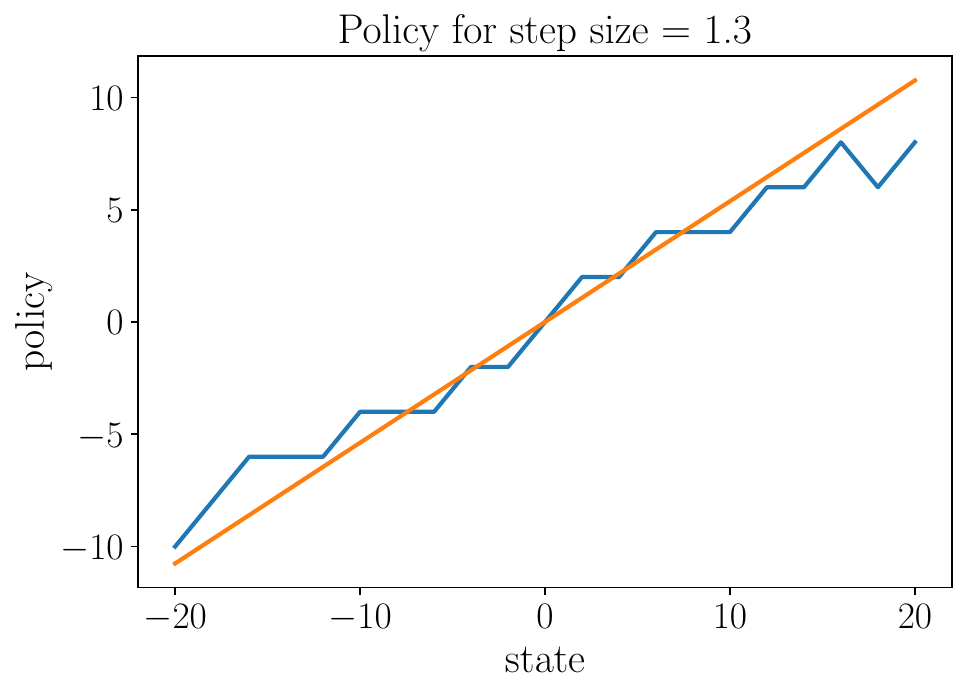}
    \end{subfigure}
    \caption{Learnt value function and policy (blue) against theoretical (orange) for $\alpha = 1.3$} \label{fig:1.3}
\end{figure*}

Note that, when we have a constant step size \(\alpha>1\), not only can we no longer guarantee monotonicity, but the square summability condition of the step size \(\alpha\) \citep{robbins_stochastic_1951} would also be violated. 

For `table-lookup' methods, as long as all states are updated infinitely often and step sizes satisfy square summability conditions \citep{robbins_stochastic_1951}, then we have convergence for policy evaluation.
This is not guaranteed when we have a general function approximator. 
The problem with having exact table representation is that they are slow to converge, and the number of states/state-action pairs suffer from the curse of dimensionality. Using a function approximator, we postulate that as long as the function approximation preserves monotonicity, then Q-learning should still converge.

When we are approximating \(Q(x, u)\) by \(\tilde{Q}(x, u, w) \), we want to minimise the expected error
\begin{equation}\nonumber
\frac{1}{2}E\big[(Q(x, u)-\tilde{Q}(x, u, w) )^2\big].
\end{equation}
We use a semi-gradient descent method and take update steps in the form of
\begin{equation} \nonumber
w_{n+1} = w_n + \alpha_n \big( Q(x, u) -\tilde{Q}(x, u, w_n) \big) \nabla_{w_n} \tilde{Q}(x, u, w_n).
\end{equation}
For reinforcement learning/control problems, we do not have access to the true value function \(Q(x,u)\) and therefore use an approximation in its place. An approximate method derived from Q-learning uses 
\begin{equation}\nonumber
Q(x,u) \approx f(x, u) + \gamma \max_{\bar{u}} \tilde{Q}(x', \bar{u}, w).
\end{equation}
Since
\begin{equation}\nonumber
Q^*(x_t, u) = f(x_t, u) + V^*(x_{t+1})
\end{equation}
and we know in the LQ case, both \(f\) and \(V\) can be expressed as a quadratic function in \(x\) and \(u\) (Appendix~\ref{app:1dctsprob}), a linear function approximator for \(Q(x,u)\) with features of terms up to quadratic powers in \(x\) and \(u\) will be a suitable function class for this approximation.

This linear approximator for \(Q(x,u)\) is represented as
\begin{equation} \label{eqn:linearapprox}
\tilde{Q}(x, u, w) = X(x, u)^\intercal w,
\end{equation}
where \(X(x,u)\) are the (e.g. quadratic) features that we extract from our state-action pair, and \(w\) are the weights. Then 
\begin{align}
\tilde{Q}^{n+1}(x, u,w_{n+1})
=& \:X^\intercal(x,u)\big(w_n + \alpha_n \big( Q(x, u) -\tilde{Q}^n(x, u, w_n) \big) X(x, u)\big) \nonumber \\
\approx &\:\tilde{Q}^n(x, u, w_n) \nonumber \\
&+ \alpha_n X^\intercal(x,u) \big( f(x,u) + \gamma \max_{\tilde{u}} \tilde{Q}^n(x', \tilde{u}, w_n) -\tilde{Q}^n(x, u, w_n) \big) X(x, u)\big) \nonumber\\
 =& \:\alpha_n f(x,u) X^\intercal(x, u)X(x, u) + (1-\alpha_n X^\intercal(x, u) X(x, u)) \tilde{Q}^n(x,u,w_n) \nonumber\\&+ \alpha_n X^\intercal(x, u) X(x, u) \max_{\tilde{u}} \tilde{Q}^n(x',\tilde{u},w_n). \label{eqn:lq_update}
\end{align}

Monotonicity implies that the action-value function \(\tilde{Q}^{n+1}(x, u,w_{n+1})\) must be non-decreasing with respect to the action-value function at the other points. Hence to ensure monotonicity we need to ensure that the features \(X(x,u)\) are bounded and we take sufficiently small step size such that \(\alpha_n X^\intercal(x, u) X(x, u) <1\), i.e. \(\alpha_n(x,u) < 1/ (X^\intercal(x, u) X(x, u)) \), the step size is dependent on the state and action. A potential problem is that we may not sufficiently explore the state space with a step size that is dependent on the state-action space (e.g. if for large values of \(x\), the action \(u\) is also large then \(\alpha\) needs to be very small).

\end{document}